\newcommand\crule[3][black]{\textcolor{#1}{\rule{#2}{#3}}}
\definecolor{Blue}{RGB}{55,126,184}
\definecolor{Orange}{RGB}{255,127,0}
\theoremstyle{plain}
\newtheorem{theorem}{Theorem}[section]
\newtheorem{proposition}[theorem]{Proposition}
\newtheorem{lemma}[theorem]{Lemma}
\theoremstyle{definition}
\theoremstyle{remark}
\newcommand{\bitem}{\begin{itemize}}
\newcommand{\eitem}{\end{itemize}}
\newcommand{\mc}[1]{\mathcal{#1}}
\newcommand{\II}{\mathbb{I}}
\newcommand{\N}{\mathbb{N}}
\newcommand{\R}{\mathbb{R}}
\newcommand{\EE}{\mathbb{E}}
\newcommand{\bpm}{\begin{pmatrix}}
\newcommand{\epm}{\end{pmatrix}}
\newcommand{\bsm}{\left(\begin{smallmatrix}}
\newcommand{\esm}{\end{smallmatrix}\right)}
\newcommand{\T}{\top}
\newcommand{\ol}[1]{\overline{#1}}
\newcommand{\wt}{\widetilde}
\newcommand{\wh}{\widehat}
\newcommand{\la}{\langle}
\newcommand{\ra}{\rangle}
\newcommand{\vphi}{\varphi}
\newcommand{\dd}{\textup{d}}
\newcommand{\eins}{\mathbb{1}}
\DeclareMathOperator{\Diag}{Diag}
\DeclareMathOperator{\vvec}{vec}
\DeclareMathOperator{\KL}{KL}
\newcommand{\expv}[1]{{\exp_{#1}^\mathfrak{v}}}
\newcommand{\Rv}[1]{{R_{#1}^\mathfrak{v}}}
\newcommand{\Projv}{{\Pi_0^\mathfrak{v}}}
\newcommand{\barycenterv}{{\eins_\mc{W}^\mathfrak{v}}}
\newcommand{\features}{{F}}
\newcommand{\dataspace}{{\mc{D}}}
\newcommand{\prior}{{\pi}}
\newcommand{\posterior}{{\mu}}
\newcommand{\pushdistr}{{\eta}}
\newcommand{\classmarginal}{{\pushdistr^{(1)}}}
\newcommand{\classindexset}{{\mc{I}}}
\newcommand{\tangentspace}{\mc{T}_{0}}
\newcommand{\samplesize}{{m}}
\newcommand{\momentcoord}{{\wh{\mathfrak{m}}}}
\newcommand{\moment}{{\mathfrak{m}}}
\newcommand{\momentmarg}{{\wt{\mathfrak{m}}}}
\newcommand{\datadistr}{{\mathfrak{D}}}
\newcommand{\risk}{{\mathfrak{L}}}
\newcommand{\emprisk}{{\mathfrak{L}_\samplesize}}
\newcommand{\qmcsamples}{{q}}
\newcommand{\softmax}{{\exp_{\eins_\mc{W}}}}
\newcommand{\Hmeasures}{{\mathcal{P}}}
\DeclareMathOperator{\ldaf}{\psi}
\DeclareMathOperator{\expm}{expm}
\title[Self-Certifying Classification by Linearized Deep Assignment]{Self-Certifying Classification by Linearized Deep Assignment}
\author[B.~Boll, A.~Zeilmann, S.~Petra, C.~Schn\"{o}rr]{Bastian Boll, Alexander Zeilmann, Stefania Petra, \\Christoph Schn\"{o}rr}
\address{Image and Pattern Analysis Group, Heidelberg University, Germany}
\email{bastian.boll@iwr.uni-heidelberg.de}
\urladdr{\url{https://ipa.math.uni-heidelberg.de}}
\date{}
\begin{document}

\begin{abstract}
We propose a novel class of deep stochastic predictors for classifying metric data on graphs within the PAC-Bayes risk certification paradigm. Classifiers are realized as linearly parametrized deep assignment flows with random initial conditions. Building on the recent PAC-Bayes literature and data-dependent priors, this approach enables (i) to use risk bounds as training objectives for learning posterior distributions on the hypothesis space and (ii) to compute tight out-of-sample risk certificates of randomized classifiers more efficiently than related work.
Comparison with empirical test set errors illustrates the performance and practicality of this self-certifying classification method.
\end{abstract}

\maketitle
\tableofcontents


\section{Introduction}\label{sec:introduction}

\subsection{Overview, Related Work}\label{sec:overview}

Self-certified learning is the task of using the entirety of available data to find a good model and to simultaneously certify its performance on unseen data from the same underlying distribution.
This is opposed to the classic two-stage paradigm in machine learning which first finds a model by using part of the data and subsequently estimates its generalization on held-out test data.

Because the true distribution of data is typically unknown, self-certified learning relies on upper-bounding model \emph{risk} through statistical learning theory.
Recently, the PAC-Bayes (Probably Approximately Correct) paradigm \cite{Catoni:2007aa,Guedj:2019tu} has attracted much attention due to the recent demonstration of tight risk bounds for deep stochastic neural networks in \cite{Dziugaite:2018wc}. The authors exploit a PAC-Bayes risk bound by, firstly, training a prior through empirical risk minimization and, secondly, by training a Gibbs posterior distribution.
Building on this, recent work \cite{Perez:2021} evaluated various \textit{relaxed PAC-Bayes-kl inequalities}\footnote{The lowercase kl refers to the relative entropy of two Bernoulli distributions.} \cite{Langford:2001}, including a new one. They showed that non-vacuous risk certificates can be determined numerically which are informative of the out-of-sample error, and that using relaxed upper bounds of the risk for training enables the use of the whole data set for both learning a predictor and certifying its risk.

Similar to \cite{Perez:2021}, our approach is to find a PAC-Bayes posterior distribution by optimizing the PAC-Bayes-$\lambda$ inequality introduced by \cite{Thiemann:2017}. This relaxed PAC-Bayes-kl inequality was shown to be quasiconvex in the parameter that trades off empirical error against model complexity in terms of KL divergence, which is convenient for optimization.

A key component of PAC-Bayes bounds is the empirical risk of stochastic classifiers. In the context of deep learning, such classifiers may be obtained by randomizing neural network weights which typically leads to analytically intractable empirical risk.
\cite{Langford:2002} therefore suggest using an upper bound via Monte-Carlo sampling, which holds with high probability and still achieves PAC risk certification with modified probability of correctness.
In order to train stochastic classifiers by optimizing PAC inequalities with differentiable surrogate loss, the gradient of empirical risk can similarly be estimated stochastically. \cite{Perez:2021} choose the pathwise gradient estimator \cite{Price:1958} and call the resulting framework PAC-Bayes with Backprop, reminiscent of the Bayes-by-Backprop paradigm \cite{Blundell:2015}.

Here, we propose a way to achieve computational tractability of empirical risk in PAC-Bayes without the need for stochastic estimators. Key is the construction of a specific hypothesis class which separates stochasticity from feature extraction by building on certain geometric neural ODEs called \emph{assignment flows} \cite{Astroem2017}. After suitable parameterization and linearization, the uncertainty quantification approach proposed in \cite{Gonzalez-Alvarado:2021vn} allows to push forward intrinsic normal distributions of initial assignment states in closed form, which can be leveraged to build deep stochastic classifiers with tractable empirical risk. 

\subsection{Contribution}\label{sec:contribution}

We adopt the PAC-Bayes-$\lambda$ inequality \cite{Thiemann:2017} to work out a two-stage method as in \cite{Dziugaite:2018wc,Perez:2021} for evaluating relaxed PAC-Bayes-kl bounds and achieve favorable computational properties compared to prior work.

To this end, we propose a generalized, deep classification variant of S-assignment flows \cite{Savarino:2021wt} and compute the corresponding pushforward distribution in closed form, building on \cite{Gonzalez-Alvarado:2021vn}.
Separating stochasticity from feature extraction, we use the computationally tractable pushforward distribution to transform the empirical risk of stochastic classifiers. This requires to compute the pushforward \emph{only once} and subsequently perform very cheap sampling of a transformed integrand in Monte-Carlo methods.
Finally, we show that for not too large numbers $c \approx 10$ of classes, much more efficient deterministic Quasi-Monte-Carlo integration \cite{Dick:2013aa} can replace Monte-Carlo estimation when evaluating risk certificates. We also show that this deterministic method commutes with backpropagation, i.e., the true gradient is well approximated by backpropagation.

Altogether, this enables us to evaluate the empirical risk and its gradient very efficiently, and to optimize the risk bound provided by the PAC-Bayes-$\lambda$ inequality with respect to both the posterior and the trade-off between empirical loss and deviation from a data-dependent prior distribution. As a consequence, the \textit{linearized deep assignment flow approach} to classification (Figure~\ref{fig:architecture}) becomes a \textit{self-certifying learning method}.
We verify its performance and the tightness of risk certificates by a comparison to the empirical test error.

\begin{figure}[ht]
	\centering
	\includegraphics[width=.8\columnwidth]{./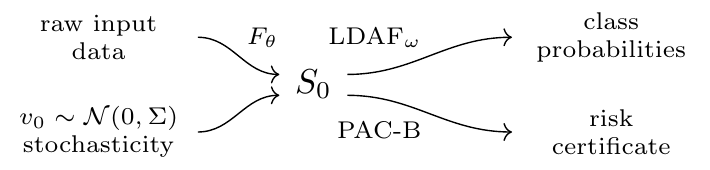}
	\caption{Proposed architecture for self-certifying stochastic classifiers. Features $\features_\theta$ are extracted from input data and define the initial state $S_0\in\mc{W}$ of assignment dynamics. The initialization is randomized on the tangent space $\tangentspace$ leading to a stochastic classifier $\text{LDAF}_\omega$ after forward integration. A PAC-Bayes bound (PAC-B) is employed for computing non-vacuous certificates (upper bound) of the classifier's risk.}\label{fig:architecture}
\end{figure}

\clearpage

\section{Background}\label{sec:background}

\subsection{(S-)Assignment Flows}\label{sec:background_af}

The \textit{assignment flow approach} \cite{Astroem2017,Schnorr2019aa} denotes a class of dynamical systems for analyzing metric data on a graph $G=(V,E)$, $|V| = n$, that is derived in a straightforward way: represent local decisions as point (`state') on a task-specific statistical manifold and perform \textit{contextual structured} decisions by the interaction of these states over the underlying graph. As for the \textit{classification} task considered here, the statistical manifold is the probability simplex equipped with the Fisher-Rao metric of information geometry \cite{ay2017information}, and the interaction corresponds to \textit{geometric} state averaging derived from the affine e-connection. Adopting the
parametrization of \cite{Savarino:2021wt}, the resulting dynamical system reads
\begin{equation}\label{eq:s_flow}
    \dot S(t) = R_{S(t)}[\Omega S(t)],\qquad S(0) = S_0,
\end{equation}
where $S(t)\in\mc{W}\subset\R_{++}^{n\times c}$ comprises the state at each vertex $i\in V$ as row vector $S_{i}(t) \in \R_{++}^{c}$, where $2\leq c\in\N$ denotes the number of classes. The underlying geometry always restricts $S(t)$ to the set of row-stochastic matrices with full support called \textit{assignment manifold} $\mc{W}$, with trivial tangent bundle $T\mc{W}=\mc{W}\times\tangentspace$, where
\begin{equation}\label{eq:def-mcT0}
\mc{T}_{0}=\{V\in \R^{n\times c}\colon \la \eins_{c},V_{i}\ra=0,\; i\in V\}.
\end{equation}
The right-hand side of \eqref{eq:s_flow} constitutes a proper vector field on $\mc{W}$ that is parametrized by the matrix $\Omega$ and mapped to $T\mc{W}$ by the state-dependent non-orthogonal projection mapping that acts row-wise by
\begin{equation}\label{eq:def-RS}
R_{S}[\Omega S]=\big(R_{S_{1}}(\Omega S)_{1},\dotsc,R_{S_{n}}(\Omega S)_{n}\big)^{\T},
\end{equation}
where $S_{i}, (\Omega S)_{i}$ denote the $i$th row vectors and
\begin{equation}\label{eq:def-RSi}
R_{S_{i}}=\Diag(S_{i})-S_{i}S_{i}^{\T}\in \R^{c\times c},\quad i\in V.
\end{equation}
Each row of \eqref{eq:s_flow} defines a \textit{replicator equation} \cite{Hofbauer:2003aa} that are \textit{coupled} over the graph through the right-hand side and interact by geometric numerical integration \cite{Zeilmann:2020aa} of the assignment flow $S(t)$. Under mild conditions on $\Omega$, $\lim_{t\to\infty} S(t)$ \text{converges} to hard label assignments and is \textit{stable} against data perturbations \cite{Zern:2020aa}.

As a consequence, \eqref{eq:s_flow} may be seen as a particular system of neural ODEs \cite{Chen:2018ab} that represent the layers of a deep network by time-discrete geometric numerical integration of the flow.
In Section \ref{sec:daf}, we adopt a `deep structured' parametrization of \eqref{eq:s_flow} and restrict ourselves to a \textit{linearization} of the resulting large-scale dynamical system.

Our main contribution is to show that a state-of-the-art PAC Bayes bound \cite{Thiemann:2017} can be evaluated both rigorously and efficiently for the corresponding hypothesis class of \textit{linearized deep assignment flows} (Section \ref{sec:risk_certification}) and lead to \textit{non-vacuous quantitative} risk certificates (Section \ref{sec:benchmarks}).

\subsection{PAC-Bayes Risk Certification}\label{sec:background_pac}

Consider stochastic classifiers, i.e., distributions $\posterior$ over a hypothesis space $\mc{H}$, elements of which are functions $\phi_{\theta} \colon \dataspace \to \tangentspace$.
Suppose $\mc{H}$ is parameterized by $\theta\in\Theta$ and identify distributions $\rho$ over $\mc{H}$ with distributions over the parameter space $\Theta$.
For given loss function $\ell\colon \tangentspace\to \R$ and a generally unknown data distribution $\datadistr$ over $\dataspace \times \tangentspace$, the goal of learning stochastic classifiers is to find $\posterior$ such that the expected risk
\begin{equation}\label{eq:def_risk}
    \EE_{\theta\sim\posterior} [\risk(\theta)] := \EE_{\theta\sim\posterior} \big[ \EE_{(x,y)\sim \datadistr}\ell(\phi_\theta(x), y)\big]
\end{equation}
is minimized. Since $\datadistr$ is unknown, the true risk $\risk(\theta)$ is difficult to estimate. A tractable related quantity is the \emph{empirical risk}
\begin{equation}\label{eq:def_emprisk}
    \emprisk(\theta) := \frac{1}{\samplesize} \sum_{k\in [\samplesize]} \ell(\phi_\theta(x_k), y_k)
\end{equation}
where $(x_k, y_k)$ denote $\samplesize$ i.i.d.~samples drawn from $\datadistr$.
PAC-Bayesian theory \cite{Catoni:2007aa,Guedj:2019tu} considers a distribution $\posterior$ called \emph{PAC-Bayes posterior} which depends on the sample as well as a reference distribution $\prior$ called \emph{PAC-Bayes prior} which does not depend on the \emph{same} sample. A goal is to construct tight, high-confidence bounds on \eqref{eq:def_risk} which only depend on tractable quantities such as \eqref{eq:def_emprisk}.
In our analysis, we use the following state-of-the-art bound.
\begin{theorem}[\textbf{PAC-Bayes-$\lambda$ Inequality }\cite{Thiemann:2017}]\label{theorem:TIWS_bound}
For any $\epsilon > 0$ and any $\lambda\in (0,2)$, it holds with probability at least $1-\epsilon$ for all posterior distributions $\posterior$ over parameters $\theta$ simultaneously
\begin{equation}\label{eq::TIWS_bound}
    \EE_{\theta\sim\posterior} [\risk(\theta)] \leq \frac{\EE_{\theta\sim\posterior}[\emprisk(\theta)]}{1-\frac{\lambda}{2}} + \frac{\KL(\posterior\colon \pi) + \log \frac{2\sqrt{\samplesize}}{\epsilon}}{\samplesize\lambda (1-\frac{\lambda}{2})},
\end{equation}
where $\samplesize$ denotes the size of an i.i.d.~sample set.
\end{theorem}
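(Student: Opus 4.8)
The plan is to deduce \eqref{eq::TIWS_bound} from the classical PAC-Bayes-kl inequality of Seeger--Langford--Maurer by a purely deterministic relaxation of the binary relative entropy, so that all of the probabilistic content is carried by a single known concentration result and the remaining work is a one-variable convexity estimate followed by elementary algebra. Throughout I fix $\lambda\in(0,2)$, abbreviate $p := \EE_{\theta\sim\posterior}[\emprisk(\theta)]$ and $q := \EE_{\theta\sim\posterior}[\risk(\theta)]$, and write $\mathrm{kl}(\cdot\,\|\,\cdot)$ for the relative entropy between two Bernoulli distributions.

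First I would invoke the PAC-Bayes-kl bound: with probability at least $1-\epsilon$ over the draw of the $\samplesize$ i.i.d.\ samples, simultaneously for every posterior $\posterior$,
$$\mathrm{kl}(p\,\|\,q)\leq \frac{\KL(\posterior\colon\prior)+\log\frac{2\sqrt{\samplesize}}{\epsilon}}{\samplesize} =: \mathcal{B}(\posterior).$$
This is the step that supplies both the confidence level and the uniformity over $\posterior$; it follows from the Donsker--Varadhan change-of-measure identity applied to $\theta\mapsto \samplesize\,\mathrm{kl}(\emprisk(\theta)\,\|\,\risk(\theta))$, Markov's inequality in the sample, joint convexity of $\mathrm{kl}$ (Jensen, to pass the two expectations over $\posterior$ inside the arguments), and Maurer's moment estimate $\EE\big[e^{\samplesize\,\mathrm{kl}(\emprisk\|\risk)}\big]\leq 2\sqrt{\samplesize}$, which produces exactly the $\log(2\sqrt{\samplesize}/\epsilon)$ term. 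As this is a cited, standard result, I would import it rather than reprove it.

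Next I would establish the deterministic lower bound that converts this implicit $\mathrm{kl}$-inequality into the explicit affine-in-$\emprisk$ form. The key estimate is the refined Pinsker inequality in its variational, $\lambda$-parametrized shape: for all $0\le p\le q\le 1$ and all $\lambda\in(0,2)$,
$$\mathrm{kl}(p\,\|\,q)\;\ge\; \lambda\,(q-p)-\tfrac{\lambda^2}{2}\,q.$$
I would prove this by observing that the right-hand side is a concave parabola in $\lambda$, maximized at $\lambda^\star=(q-p)/q\in[0,1]\subset(0,2)$ with maximal value $(q-p)^2/(2q)$, so it suffices to show the asymmetric bound $\mathrm{kl}(p\,\|\,q)\ge (q-p)^2/(2q)$; the latter follows by fixing $p$, checking equality at $q=p$, and verifying that the $q$-derivative of the difference is nonnegative on $[p,1)$ using $\tfrac{2q}{1-q}\ge q+p$.

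Finally I would combine the two displays: chaining $\lambda(q-p)-\tfrac{\lambda^2}{2}q\le \mathrm{kl}(p\,\|\,q)\le \mathcal{B}(\posterior)$ and solving the resulting linear inequality $\lambda q\,(1-\tfrac{\lambda}{2})\le \lambda p+\mathcal{B}(\posterior)$ for $q$ (legitimate since $1-\tfrac{\lambda}{2}>0$ on $(0,2)$) yields exactly \eqref{eq::TIWS_bound}. The only genuine obstacle is pinning down the correct lower bound on $\mathrm{kl}$: a generic Pinsker estimate $\mathrm{kl}(p\,\|\,q)\ge 2(q-p)^2$ would give symmetric, and hence wrong, denominators, whereas the asymmetric bound with $q$ in the denominator is precisely what manufactures the characteristic factors $1-\tfrac{\lambda}{2}$ and $\lambda(1-\tfrac{\lambda}{2})$. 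Getting this relaxation right---and noting that $\lambda$ must be fixed before seeing the data for the uniform-over-$\posterior$ statement to hold---is where the care is needed.
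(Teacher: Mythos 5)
The paper imports this theorem verbatim from \cite{Thiemann:2017} and contains no proof of its own, so there is no in-paper argument to compare against; judged on its merits, your derivation is correct and is essentially the original route of the cited source. The chain PAC-Bayes-kl $\Rightarrow$ refined Pinsker $\mathrm{kl}(p\,\|\,q)\ge (q-p)^2/(2q)$ for $p\le q$ $\Rightarrow$ linearization $\lambda(q-p)-\tfrac{\lambda^2}{2}q$ $\Rightarrow$ solve for $q$ reproduces \eqref{eq::TIWS_bound} exactly, your monotonicity check via $\tfrac{2q}{1-q}\ge q+p$ is sound, and the case $p>q$ is vacuous for the final bound. Two small remarks. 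First, the PAC-Bayes-kl step requires the loss to take values in $[0,1]$ (Maurer's moment bound), which you use implicitly by treating $p,q$ as Bernoulli parameters; this hypothesis should be stated, and it matters for the paper, which certifies only the 01-loss for this reason. Second, your closing caveat that $\lambda$ must be fixed before seeing the data is unnecessarily conservative: the $\lambda$-relaxation of $\mathrm{kl}$ is a deterministic inequality valid for every $\lambda$ on the same high-probability event, so the bound in fact holds uniformly over $\lambda\in(0,2)$ and $\posterior$ simultaneously --- which is precisely what legitimizes the alternating minimization over $(\posterior,\lambda)$ performed in Section~\ref{sec:training}.
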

Regarding the evaluation of the right-hand side, key issues are the definition of prior and posterior distributions $\pi, \mu$ over the hypothesis space and the accurate and efficient computation of the \textit{expected} empirical risk $\EE_{\theta\sim\posterior}[\emprisk(\theta)]$, which typically is a hard task in practice. We deal with these issues in Sections \ref{sec:hypothesis_space}, \ref{sec:data_dependent_prior} and \ref{sec:empirical_risk}, \ref{sec:QMC}, respectively.

\section{Deep Assignment Flows}\label{sec:daf}

\subsection{Deep S-Flows}\label{sec:generalized_sflow}
Motivated by the use of coupled replicator dynamics in game theory \cite{Madeo:2014}, we generalize S-flows \eqref{eq:s_flow} by enabling additional interaction on the label space.
To shorten notation, vectorize $S, V, R_{S}$ given by \eqref{eq:s_flow}--\eqref{eq:def-RSi} and the orthogonal projection $\Pi_{0}\colon\R^{|V|\times c}\to\mc{T}_{0}$ according to
\begin{gather}
    s := \vvec(S),\qquad v := \vvec(V)\\
    \Projv v := \vvec(\Pi_0V),\qquad \Rv{s}v := \vvec(R_{S}V)\\
    \expv{s}v := \vvec(\exp_{S}V),
\end{gather}
where $\exp_{S}(V)=(\exp_{S_{1}}(V_{1}),\dotsc,\exp_{S_{1}}(V_{1}))\in\mc{W}$ and $\exp_{S_{i}}(V_{i})=\frac{S_{i}e^{V_{i}}}{\la S_{i},e^{V_{i}}\ra}$ with componentwise multiplication in the numerator.
In vectorized notation, the S-flow dynamics \eqref{eq:s_flow} read
\begin{equation}\label{eq:s_flow_vectorized}
	\dot s(t) = \Rv{s(t)}(\Omega \otimes \II_c)s(t),\qquad s(0) = s_0 = \vvec(S_0)\ .
\end{equation}
We now generalize by breaking up the Kronecker product structure of $\Omega \otimes \II_c$. Re-using the symbol $\Omega$ to denote a matrix
\begin{equation}\label{eq:wt_omega}
    \Omega\in \R^{N\times N}, \qquad N = cn
\end{equation}
we define the \emph{deep assignment flow} (DAF) in vectorized form as
\begin{equation}\label{eq:daf_flow}
    \dot s(t) = \Rv{s(t)}\Omega s(t),\qquad s(0) = s_0\ .
\end{equation}
This class of dynamics is more general than \eqref{eq:s_flow} while remaining amenable to lifting and linearization with minimal modifications to other assignment flows \cite{Zeilmann:2020aa,Boll:2021vb}. Concerning the PAC-Bayes risk certification, we observe that \eqref{eq:daf_flow} typically leads to better generalization and more gain between posterior and prior as compared to \eqref{eq:s_flow}.

\subsection{Classification by Deep Assignment}\label{sec:classification_daf}
Unlike typical assignment flow approaches, our aim is not to perform image labeling (i.e., segmentation) but classification. To this end, we choose the underlying graph to be relatively small ($n = 50$ nodes) and densely connected with learned symmetric matrix $\Omega \in \R^{N\times N}$ (cf. \eqref{eq:wt_omega}). We also designate a single node to carry class probabilities. Through the dynamics \eqref{eq:daf_flow}, the state of this node will evolve towards an integer assignment, i.e., a class decision.
By convention, let the classification node be the node with index $1$ and let
\begin{equation}\label{eq:class_node_index}
	\classindexset = [c] := \{1,\dotsc,c\}
\end{equation}
denote the set of indices such that $s_\classindexset = S_1$ contains classification probabilities.
Further, denote the relative interior of a single probability simplex with $c$ corners by $\mc{S}_c$.

\subsection{Linearizing Deep Assignment Flows}\label{sec:daf_linearization}
We parameterize DAFs in the tangent space of $\mc{W}$ at $S_0$ by $s(t) = \expv{s_0}(v(t))$ where $v(t)$ follows
\begin{equation}\label{eq:tangent_space_daf}
    \dot v(t) = \Projv \Omega \expv{s_0}(v(t)),\qquad v(0) = v_0 = 0
\end{equation}
and compute using $d\exp_{S_{0;i}}(V_{0;i})[U]=R_{S_{0;i}}[U]$
\begin{align}
    \Projv \Omega\expv{s_0}(v_0) &= \Projv \Omega s_0\\
    \dd (\Projv \circ\Omega\circ\expv{s_0})(v_0)[u] &= \Projv \Omega \dd (\expv{s_0})(v_0)[u] = \Projv \Omega \Rv{s_0} u
\end{align}
which yields the \textit{linearized} DAF
\begin{equation}\label{eq:linearized_daf}
    \dot v(t) = \underbrace{\Projv \Omega \Rv{s_0}}_{=: A}v(t) + \underbrace{\Projv \Omega s_0}_{=: v_D}
\end{equation}
The solution in closed form reads
\begin{equation}\label{eq:linearized_daf_solution}
    v(t) = t\varphi(tA)v_D = t\varphi\left(t\Projv \Omega \Rv{s_0}\right)\Projv \Omega s_0
\end{equation}
and Krylov methods for evaluating the analytical matrix-valued function $\vphi(z)=\frac{e^{z}-1}{z}$ as well as respective gradient approximations computed in \cite{Zeilmann:2021wt} apply without modification.
Note that even though $\varphi(tA)$ acts linearly on $v_D$ in \eqref{eq:linearized_daf_solution}, LDAF dynamics are much more capable than a learned linear map $\tangentspace\to T_0\mc{S}_c$. This is due to the fact that $A = \Projv \Omega \Rv{s_0}$ depends on $s_0$ so \textit{each} input datum is transformed by a \textit{different} linear operator.

\section{Risk Certification of Stochastic LDAF Classifiers}\label{sec:risk_certification}

We consider PAC-Bayes risk certificates which bound the expected risk of a stochastic classifier (see Section~\ref{sec:background_pac}). The evaluation of such a certificate requires evaluation of expected empirical risk which presents a computational challenge. To mitigate this, one may use Monte-Carlo methods to upper-bound the expected empirical risk with high probability as proposed in \cite{Langford:2002}. Here, we propose instead a strategic choice of hypothesis class and shape of stochastic classifiers which allows to directly compute the expected empirical risk efficiently and precisely while also allowing for the use of deep feature extractors.

\subsection{LDAF Hypothesis Space}\label{sec:hypothesis_space}

We define the hypothesis space $\mc{H}$ of classifiers $\phi$ built by composing a feature extractor with LDAF dynamics \eqref{eq:linearized_daf} up to time $T > 0$.
To this end, fix a small, densely connected graph. We use $n = 50$, $c = 10$ in our experiments.
This defines an associated assignment manifold $\mc{W}$ and tangent space $\tangentspace$ on which a linear operator $\Omega$ specifies DAF dynamics \eqref{eq:tangent_space_daf} with linearization \eqref{eq:linearized_daf}. We assume $\Omega$ is symmetric and denote the vector of learnable parameters defining $\Omega$ by $\omega$.
For a given data point $x$ in some vector space $\dataspace$ such as the space of RGB images, a corresponding initial point $s_0\in\mc{W}$ is computed by extracting features using a neural network $\features_\vartheta\colon \dataspace\to \tangentspace$ with parameters $\vartheta$ and setting $s_0 = \softmax (\features_\vartheta(x))$.
Following linearization of the DAF vector field, we take the initialization $v_0\in \tangentspace$ as additional parameters. Forward integration up to time $T$ gives a state $s(T) = \exp_{s_0}(v(T))\in \mc{W}$ which contains class probabilities $S(T)_1\in\mc{S}_c$ at the classification node.
We collect the described sequence of operations on $\mc{W}$ into a function $\ldaf_{\omega,v_0}$ and call
\begin{equation}\label{eq:ldaf_pac_hypothesis_class}
    \mc{H} = \{\phi\colon \R^d\to \mc{S}_c\;|\; \phi = \ldaf_{\omega,v_0} \circ \;\softmax\circ \features_\vartheta\}
\end{equation}
the hypothesis class of \emph{LDAF classifiers}. Measures on $\mc{H}$ are identified with measures on the parameter space $\ol{\mc{H}}$ which contains triples $\theta = (\vartheta, \omega, v_0)$.

Denote by $\Hmeasures$ the class of probability measures $\posterior$ on $\mc{H}$ with shape
\begin{equation}\label{eq:measure_product_shape}
    \posterior = \delta_{\vartheta} \times \delta_{\omega} \times \mc{N}(0,\Sigma_0)
\end{equation}
where $\mc{N}(0,\Sigma_0)$ denotes an intrinsic normal distribution on $\tangentspace$ (cf.~chapter 3 in \cite{Rue:2005aa}) centered at $0$.
Due to the structure of $\tangentspace$ as a linear subspace of $\R^{N}$, covariance matrices $\Sigma_0\in \R^{N\times N}$ are positive semi-definite.
Each measure in $\Hmeasures$ corresponds to a stochastic LDAF classifier which operates by taking an independent sample from $\posterior$ for each datum. In order to compute PAC-Bayes risk certificates, we need to compute the expected empirical risk of stochastic classifiers as well as their complexity with respect to a reference distribution. Suppose the reference distribution $\prior$ (PAC-Bayes prior) also has shape \eqref{eq:measure_product_shape}. Further, $\omega$ and $\vartheta$ are fixed and only the distribution of $v_0$ differs between $\prior$ and $\posterior$ (PAC-Bayes posterior). The following lemma asserts that within the constructed setting, model complexity in terms of relative entropy is well-defined and computationally feasible.
\begin{lemma}\label{lem:well_defined_kl}
Fix the distributions $\posterior = \delta_{\vartheta} \times \delta_{\omega} \times \mc{N}(0,\Sigma_0)$ and $\prior = \delta_{\vartheta} \times \delta_{\omega} \times \mc{N}(0,\wt{\Sigma}_0)$ in $\Hmeasures$. Then $\posterior$ is absolutely continuous with respect to $\prior$ ($\posterior \ll \prior$) and
\begin{equation}\label{eq:KL_divergence}
    \KL [\posterior\colon \prior] = \KL[\mc{N}(0,\Sigma_0)\colon \mc{N}(0,\wt{\Sigma}_0)]
\end{equation}
\end{lemma}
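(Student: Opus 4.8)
The plan is to exploit the product structure of $\posterior$ and $\prior$ over the parameter space $\ol{\mc{H}}$ of triples $\theta = (\vartheta,\omega,v_0)$ and reduce the entire claim to the third (Gaussian) factor. I would start from the elementary fact that for product measures $\posterior = \posterior_1\times\posterior_2\times\posterior_3$ and $\prior = \prior_1\times\prior_2\times\prior_3$ on a product space, the implication $\posterior_i\ll\prior_i$ for every $i$ yields $\posterior\ll\prior$ with Radon--Nikodym derivative $\tfrac{\dd\posterior}{\dd\prior}$ equal to the product of the factorwise derivatives evaluated coordinatewise, and hence the additive decomposition
\begin{equation}
    \KL[\posterior\colon\prior] = \sum_{i=1}^{3} \KL[\posterior_i\colon\prior_i].
\end{equation}

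Next I would dispose of the two Dirac factors. By assumption the first two factors coincide, $\posterior_1 = \prior_1 = \delta_\vartheta$ and $\posterior_2 = \prior_2 = \delta_\omega$, so trivially $\delta_\vartheta\ll\delta_\vartheta$ and $\delta_\omega\ll\delta_\omega$ with constant density $1$, giving $\KL[\delta_\vartheta\colon\delta_\vartheta] = \KL[\delta_\omega\colon\delta_\omega] = 0$. The decomposition therefore collapses to the claimed identity $\KL[\posterior\colon\prior] = \KL[\mc{N}(0,\Sigma_0)\colon\mc{N}(0,\wt{\Sigma}_0)]$, provided the Gaussian factor satisfies $\mc{N}(0,\Sigma_0)\ll\mc{N}(0,\wt{\Sigma}_0)$.

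The genuinely nontrivial step, and what I expect to be the main obstacle, is establishing this absolute continuity for two \emph{centered intrinsic} normal distributions on $\tangentspace$. Since $\tangentspace$ is a proper linear subspace of $\R^N$, the covariances $\Sigma_0,\wt{\Sigma}_0$ are only positive semidefinite, so neither Gaussian admits a Lebesgue density on $\R^N$ and the standard nondegenerate criterion does not apply verbatim. The correct condition is the range inclusion $\rge(\Sigma_0)\subseteq\rge(\wt{\Sigma}_0)$; here both distributions are by construction supported on the same subspace, $\rge(\Sigma_0) = \rge(\wt{\Sigma}_0) = \tangentspace$, which secures $\posterior_3\ll\prior_3$ (indeed mutual absolute continuity) as measures concentrated on $\tangentspace$.

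To make this rigorous and at the same time exhibit a finite relative entropy, I would pass to intrinsic coordinates. Fixing a matrix $B\in\R^{N\times d}$ whose columns form an orthonormal basis of $\tangentspace$, with $d = \dim\tangentspace = n(c-1)$, identifies the two intrinsic Gaussians with the \emph{nondegenerate} Gaussians $\mc{N}(0,B^\T\Sigma_0 B)$ and $\mc{N}(0,B^\T\wt{\Sigma}_0 B)$ on $\R^d$. In these coordinates both covariances are positive definite, so absolute continuity and a finite closed-form $\KL$ follow from the classical Gaussian formula; a short check that the resulting value is invariant under the choice of basis $B$ then confirms it equals $\KL[\mc{N}(0,\Sigma_0)\colon\mc{N}(0,\wt{\Sigma}_0)]$, completing the argument.
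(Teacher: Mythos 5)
Your proposal is correct and follows essentially the same route as the paper: decompose absolute continuity and the relative entropy over the three independent factors, observe that the identical Dirac factors contribute nothing, and reduce the Gaussian factor to the equivalence of the two intrinsic normal distributions on $\tangentspace$. You are in fact somewhat more careful than the paper about the degenerate-covariance issue (the paper simply asserts that the two normal distributions are equivalent measures, whereas you justify this via the range condition $\rge(\Sigma_0)=\rge(\wt{\Sigma}_0)=\tangentspace$ and a passage to nondegenerate coordinates on the subspace), which is a welcome refinement rather than a deviation.
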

\begin{proof}
Let $A$ be a measurable subset of $\ol{\mc{H}}$ with $\prior(A) = 0$. Then
\begin{equation}
    \delta_{\vartheta}(A_1) \delta_{\omega}(A_2) \mc{N}(0,\wt{\Sigma}_0)(A_3) = 0
\end{equation}
for projections $A_1$, $A_2$, $A_3$ of $A$ onto the respective coordinates. Therefore, at least one of the factors needs to vanish. If either of the first two vanishes, this directly implies $\posterior(A) = 0$.
In addition, $\mc{N}(0,\wt{\Sigma}_0)(A_3) = 0$ implies $\mc{N}(0,\Sigma_0)(A_3) = 0$ and thus $\posterior(A) = 0$ because both normal distributions define equivalent measures.
It follows $\posterior \ll \prior$. Because the three factors in $\posterior$ resp. $\prior$ are independent, the relative entropy decomposes as
\begin{equation}
\KL [\posterior\colon \prior] =
        \KL[\mc{N}(0,\Sigma_0)\colon \mc{N}(0,\wt{\Sigma}_0)]\\
        +\underbrace{\KL[\delta_{\vartheta}\colon \delta_{\vartheta}]}_{= 0}
        +\underbrace{\KL[\delta_{\omega}\colon \delta_{\omega}]}_{= 0}  
\end{equation}
\end{proof}

\subsection{Data-Dependent Prior}\label{sec:data_dependent_prior}

Recently, the use of data for finding a good prior $\prior$ has been identified as critical for obtaining sharp generalization bounds. This development was sparked by non-vacuous risk bounds for neural networks achieved by \cite{Dziugaite:2018wc}. Unlike this work, we do not make use of differential privacy to account for sharing data between prior and posterior. Instead, we forego potentially more efficient use of data in favor of simplicity by splitting the available dataset into a \emph{training} and a \emph{validation} set. The training set is used to compute a PAC-Bayes prior distribution $\prior$ via empirical risk minimization. The validation set is subsequently used to fine-tune the PAC-Bayes posterior distribution $\posterior$ by minimizing a risk bound for a differentiable surrogate loss starting from $\prior$. In addition, the validation set is also used to evaluate the final classification risk certificate. In order to achieve the setting assumed in Lemma~\ref{lem:well_defined_kl}, we only vary the distribution of $v_0$ when optimizing $\posterior$ and keep the feature extraction parameters $\vartheta$ as well as the fitness parameters $\omega$ fixed.

\subsection{Computing the Expected Empirical Risk}\label{sec:empirical_risk}

We now aim to leverage the analytical tractability of LDAF forward integration to efficiently compute the empirical risk of stochastic classifiers in $\Hmeasures$. This can be done irrespective of feature extraction because stochasticity only pertains to the LDAF initialization $v_0$. Key to the construction is the ability to push forward a multivariate normal distribution on $\tangentspace$ under LDAF dynamics in closed-form. This amounts to an extension of the uncertainty quantification approach \cite{Gonzalez-Alvarado:2021vn} to the deep flows considered here.

\begin{proposition}[\textbf{LDAF Pushforward}]\label{prop:ldaf_pushforward}
Consider the LDAF dynamics \eqref{eq:linearized_daf} and let $v(0)\sim \mc{N}(0, \Sigma_0)$. Then $v(t)$ follows the multivariate normal distribution $\pushdistr(t) = \mc{N}(\moment(t), \Sigma(t))$ for every $t > 0$ with moments
\begin{subequations}\label{eq:pushforward_moments}
\begin{align}
    \moment(t) &= t\varphi(tA)b,\\
    \Sigma(t) &= \expm(tA)\Sigma_0\expm(tA)^\top
\end{align}
\end{subequations}
\end{proposition}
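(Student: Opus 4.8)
The plan is to exploit the fact that the linearized dynamics \eqref{eq:linearized_daf} is an \emph{affine} (inhomogeneous linear) ODE whose flow is a deterministic affine map, together with the elementary fact that affine images of Gaussians remain Gaussian. First I would solve \eqref{eq:linearized_daf} for a general (random) initial value $v(0) = v_0$ rather than for $v_0 = 0$. Writing $A = \Projv \Omega \Rv{s_0}$ and $b = v_D = \Projv \Omega s_0$, the variation-of-constants (Duhamel) formula gives
\begin{equation}
    v(t) = \expm(tA)\, v_0 + \int_0^t \expm\big((t-\tau)A\big)\, b \,\dd\tau.
\end{equation}
Substituting $u = t-\tau$ and using the termwise identity $\int_0^t \expm(uA)\,\dd u = t\,\varphi(tA)$, which follows from $\varphi(z) = \sum_{k\geq 0} z^k/(k+1)!$, the integral collapses to $t\varphi(tA)b = \moment(t)$; for $v_0 = 0$ this recovers the closed form \eqref{eq:linearized_daf_solution}.

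Next I would read off the distribution. Since $A$ and $b$ are fixed once $s_0$ is given, the assignment $v_0 \mapsto v(t) = \expm(tA)\, v_0 + \moment(t)$ is a deterministic affine transformation, and an affine image of a Gaussian is again Gaussian. With $v_0 \sim \mc{N}(0,\Sigma_0)$ this yields $v(t) \sim \mc{N}(\moment(t), \Sigma(t))$, where
\begin{align}
    \EE[v(t)] &= \expm(tA)\,\EE[v_0] + \moment(t) = \moment(t),\\
    \mathrm{Cov}[v(t)] &= \expm(tA)\,\Sigma_0\,\expm(tA)^\top = \Sigma(t),
\end{align}
using $\EE[v_0] = 0$ and the transformation rule $\mathrm{Cov}[Mv_0] = M\Sigma_0 M^\top$ for the linear part $M = \expm(tA)$. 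This is exactly \eqref{eq:pushforward_moments}.

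The one point requiring care is that $\Sigma_0$ is only positive semidefinite and supported on the linear subspace $\tangentspace$, so $\pushdistr(t)$ must be read as an intrinsic (possibly degenerate) normal distribution on $\tangentspace$. I would therefore verify that the flow preserves $\tangentspace$: both $A$ and $b$ have range contained in $\tangentspace = \rge \Projv$ by construction, hence $A$ maps $\tangentspace$ into itself, and consequently $\expm(tA)$ and $\varphi(tA)$ leave $\tangentspace$ invariant (each acts as the identity plus a power series in $A$). Thus $v(t) \in \tangentspace$ almost surely, the covariance $\Sigma(t)$ stays supported on $\tangentspace$, and the moment formulas are consistent with the intrinsic interpretation. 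I do not anticipate a genuine obstacle: the statement reduces to the closed-form solution of a linear ODE combined with the stability of Gaussians under affine maps, and the only bookkeeping is the degenerate covariance on $\tangentspace$ and confirming the $\varphi$-representation of the Duhamel integral.
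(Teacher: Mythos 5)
Your proposal is correct and rests on the same backbone as the paper's proof: the flow of the affine ODE \eqref{eq:linearized_daf} is a deterministic affine map of the initial value, and affine images of Gaussians are Gaussian. The two arguments diverge only in how the moments \eqref{eq:pushforward_moments} are obtained. You read them off directly from the Duhamel solution $v(t) = \expm(tA)v_0 + t\varphi(tA)b$ using the transformation rules $\EE[Mv_0 + c] = M\EE[v_0] + c$ and $\mathrm{Cov}[Mv_0] = M\Sigma_0 M^\top$, which is the most economical route once the closed form is in hand. The paper instead derives and solves the moment ODEs $\dot{\moment}(t) = A\moment(t) + b$ and $\dot\Sigma(t) = A\Sigma(t) + \Sigma(t)A^\top$, mirroring the uncertainty-quantification computation it cites; that route would also apply if one only had the differential description of the flow rather than its closed form, but here it is strictly more work for the same answer. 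Two points where your write-up is actually more careful than the paper's: you justify the identity $\int_0^t \expm(uA)\,\dd u = t\varphi(tA)$ behind the inhomogeneous term (the paper asserts the modified closed form without derivation), and you explicitly address the degeneracy of $\Sigma_0$ as a covariance supported on the subspace $\tangentspace$ together with the invariance of $\tangentspace$ under $\expm(tA)$ and $\varphi(tA)$ --- an issue the paper's proof passes over silently even though it matters for interpreting $\pushdistr(t)$ as an intrinsic normal distribution.
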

\begin{proof}
For $v_0 \neq 0$, the closed form solution \eqref{eq:linearized_daf_solution} is modified to
\begin{equation}
    v(t) = \expm (tA)v_0 + t\varphi(tA)b.
\end{equation}
We see that for fixed $t>0$, $v(0)$ is mapped to $v(t)$ by an affine transformation. Therefore, $v(t)$ still follows a multivariate normal distribution.
Denote its moments by
\begin{subequations}
\begin{align}
    \moment(t) &= \EE [v(t)],\\
    \Sigma(t) &= \EE [(v(t)-\moment(t))(v(t)-\moment(t))^\top].
\end{align}
\end{subequations}
One readily computes
\begin{equation}
    \dot \moment(t) = \EE [\dot v(t)]
        = \EE [Av(t)+b]
        = Am(t) + b
\end{equation}
which has the closed form solution
\begin{equation}
    \moment(t) = t\varphi(tA)b
\end{equation}
because $\moment(0) = \EE [v(0)] = 0$. A straightforward computation shows that
\begin{equation}
    \dot \Sigma(t) = \EE [ \dot v(t)(v(t)-\moment(t))^\top + (v(t)-\moment(t))\dot v(t)^\top ]
\end{equation}
with $\Sigma(0) = \Sigma_0$. Inserting the shape of \eqref{eq:linearized_daf} now gives
\begin{align}\label{eq:pushforward_covariance}
    \dot \Sigma(t) 
        &= A\Sigma(t) + \Sigma(t) A^\top
\end{align}
with closed form solution
\begin{equation}\label{eqbb:cov_closed_form}
    \Sigma(t) = \expm(tA)\Sigma_0\expm(tA)^\top
\end{equation}
analogous to the computation in \cite{Gonzalez-Alvarado:2021vn}.
\end{proof}

The full covariance matrix $\Sigma(t)\in \R^{N\times N}$ in \eqref{eq:pushforward_covariance} is quite large ($N = nc$ as in \eqref{eq:wt_omega}) and expensive to compute. However, for the purpose of classification, we only need the marginal $\classmarginal(T)$ of $\pushdistr(T)$ for the classification node (the first node by convention). Because $\pushdistr(T)$ is a normal distribution, the moments of $\classmarginal(T)$ are the subvectors of \eqref{eq:pushforward_moments} built from all entries with indices $\classindexset$ (cf.~\eqref{eq:class_node_index}).
We may now leverage the available closed form \eqref{eq:pushforward_moments} to transform the empirical risk of stochastic LDAF classifiers, which is the main technical contribution of this paper.

\begin{theorem}[\textbf{LDAF Expected Empirical Risk}]\label{theorem:ldaf_empirical_risk}
Fix (linear) coordinates of $T_0\mc{S}_c$ by choosing the columns of
\begin{equation}\label{eq:T0_coordinates}
    P := \bpm \II_{c-1}\\ -\eins_{c-1}^\top\epm\in \R^{c\times (c-1)}
\end{equation}
as basis vectors. For a given data sample $\{(x_k, y_k)\}_{k\in [\samplesize]}$ and loss function $\ell\colon T_0\mc{S}_c\times [c]\to \R$, the stochastic classifier with distribution $\posterior = \delta_{\vartheta}\times\delta_{\Omega}\times\mc{N}(0,\Sigma_0)$ on the hypothesis class $\mc{H}$ has expected empirical risk $\EE_{v_0 \sim \posterior}[\emprisk(v_0)]$ given by
\begin{equation}\label{eq:ldaf_emp_risk}
    \frac{1}{\samplesize} \sum_{k\in [\samplesize]} \int_{\R^{c-1}} \ell(Pz + \features_\vartheta(x_k), y_k) \rho_{\momentcoord(x_k), \wh\Sigma(x_k)}(z)\dd z.
\end{equation}
Here, $\rho$ denotes the density of a multivariate normal distribution with the indicated moments
\begin{subequations}
\begin{align}
    \momentcoord(x_k) &= \moment(T)_{\classindexset\setminus \{c\}}\\
    \wh \Sigma(x_k) &= \Sigma(T)_{\classindexset\setminus \{c\},\classindexset\setminus \{c\}}
\end{align}
\end{subequations}
which are subvectors resp.\ submatrices of \eqref{eq:pushforward_moments} for each input datum derived from the marginal distribution $\classmarginal(T)$ of $\pushdistr(T)$ for the classification node. The last index $c$ is omitted due to the shape of basis \eqref{eq:T0_coordinates}.
\end{theorem}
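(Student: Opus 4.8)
\section*{Proof proposal}

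The plan is to combine linearity of expectation, the closed-form pushforward of Proposition~\ref{prop:ldaf_pushforward}, and a marginalization of the resulting jointly Gaussian state onto the classification node, read off in the coordinates fixed by $P$. First I would use linearity of expectation to reduce the claim to a single datum: since $\emprisk$ is the average over $k\in[\samplesize]$ and the only randomness lives in $v_0\sim\mc{N}(0,\Sigma_0)$, it suffices to show for each fixed $k$ that $\EE_{v_0}[\ell(\phi_\theta(x_k),y_k)]$ equals the $k$-th integral in \eqref{eq:ldaf_emp_risk}; summing over $k$ and dividing by $\samplesize$ then closes the argument.

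The geometric core is to express the classifier output in tangent coordinates centered at the barycenter $\eins_\mc{W}$. By construction $s_0=\softmax(\features_\vartheta(x_k))$, so node-wise $s_{0,i}=\exp_{\eins}(\features_\vartheta(x_k)_i)$, and the final state is $s(T)=\exp_{s_0}(v(T))$. From the simplex exponential $\exp_{S_i}(V_i)=S_i e^{V_i}/\la S_i, e^{V_i}\ra$ one gets the node-wise additive identity $\exp_{\exp_{\eins}(w)}(v)=\exp_{\eins}(w+v)$, since both sides equal $e^{w+v}/\la\eins,e^{w+v}\ra$. Hence the output at the classification node is $\exp_{\eins}\big(\features_\vartheta(x_k)+v(T)_\classindexset\big)$, where $\features_\vartheta(x_k)$ is understood as the feature restricted to that node; both summands lie in $T_0\mc{S}_c$, so the output is represented in tangent coordinates exactly by $\features_\vartheta(x_k)+v(T)_\classindexset$. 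This is precisely the argument $Pz+\features_\vartheta(x_k)$ of $\ell$ once $v(T)_\classindexset$ is written in the $P$-coordinates, and it shows that $\features_\vartheta(x_k)$ enters purely as a deterministic mean-shift while all stochasticity resides in $v(T)_\classindexset$.

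Next I would push the distribution forward. The map $v_0\mapsto v(T)$ is affine (cf.\ the closed form in the proof of Proposition~\ref{prop:ldaf_pushforward}), so $v(T)$ is jointly Gaussian with moments $\moment(T),\Sigma(T)$ given by \eqref{eq:pushforward_moments}; note that $A$ and $b$ depend on the datum $x_k$ through $s_0$, which is why the resulting moments are indexed by $x_k$. Marginalizing a Gaussian to a coordinate block merely selects the corresponding subvector of the mean and submatrix of the covariance, so $v(T)_\classindexset$ is Gaussian with moments $\moment(T)_\classindexset,\Sigma(T)_{\classindexset,\classindexset}$. Because $v(T)_\classindexset\in T_0\mc{S}_c$ satisfies $\la\eins_c, v(T)_\classindexset\ra=0$, its last component is determined by the first $c-1$, i.e.\ $v(T)_\classindexset=P\,v(T)_{\classindexset\setminus\{c\}}$ with $P$ as in \eqref{eq:T0_coordinates}. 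Thus $z:=v(T)_{\classindexset\setminus\{c\}}$ is a genuine $(c-1)$-dimensional Gaussian with density $\rho_{\momentcoord(x_k),\wh\Sigma(x_k)}$, whose moments are the subvector and submatrix of \eqref{eq:pushforward_moments} indexed by $\classindexset\setminus\{c\}$. Rewriting $\EE_{v_0}[\ell(\phi_\theta(x_k),y_k)]$ as an expectation over $z$ and then as a Lebesgue integral against $\rho$ yields the $k$-th summand of \eqref{eq:ldaf_emp_risk}.

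The individual steps are routine; the two points that need care are the node-wise additive identity for the exponential map, which is what lets the feature decouple as a mean-shift from the stochastic term, and the passage from the rank-deficient $c\times c$ covariance supported on $T_0\mc{S}_c$ to the honest $(c-1)\times(c-1)$ covariance $\wh\Sigma(x_k)$ through the coordinate map $P$. I expect the latter bookkeeping---matching the marginal moments to the claimed subvectors and submatrices and confirming that $\rho$ is a bona fide density on $\R^{c-1}$---to be the main place where one must be precise, rather than a genuine obstacle.
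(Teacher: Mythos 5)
Your proposal is correct and follows essentially the same route as the paper's proof: reduce to a single datum by linearity, apply the affine pushforward of Proposition~\ref{prop:ldaf_pushforward}, marginalize onto the classification node, pass to the $(c-1)$-dimensional $P$-coordinates where the degenerate Gaussian becomes a proper density, and use the additive identity $\exp_{\exp_{\eins}(w)}(v)=\exp_{\eins}(w+v)$ to absorb the feature as a deterministic mean shift. The only cosmetic difference is ordering: the paper verifies the block structure of the marginal covariance $\wt\Sigma = P\wh\Sigma P^\top$ up front and defers the exponential-map identity to the end, whereas you do the reverse; the substance is identical.
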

\begin{proof}
Note that $A$ and $b$ in \eqref{eq:pushforward_moments} depend on the initial assignment state $s_0 = \expv{\barycenterv}(\features_\vartheta(x))$ according to \eqref{eq:linearized_daf} which clarifies the data dependence of $\momentcoord = \momentcoord(x_k)$.
Because $T_0\mc{S}_c$ is a vector space, any proper multivariate normal distribution in $T_0\mc{S}_c$ corresponds to an improper intrinsic multivariate normal distribution in (linear) coordinates such as in the basis \eqref{eq:T0_coordinates}.
The particular choice of basis \eqref{eq:T0_coordinates} is such that mean an covariance are entries of the moments $\momentmarg = \moment(T)_\classindexset$, $\wt\Sigma = B B^\top$ of $\classmarginal(T)$ because
\begin{subequations}
\begin{align}
    \momentmarg &= \EE [v(T)_\classindexset] = \bpm \momentcoord\\ -\la \eins_{c}, \momentcoord\ra\epm\\
    \wt \Sigma &= \EE[(v(T)_\classindexset- \momentmarg)(v(T)_\classindexset-\momentmarg)^\top] \\
        &= P\EE[(v(T)_{\classindexset\setminus \{c\}}-\momentcoord)(v(T)_{\classindexset\setminus \{c\}}-\momentcoord)^\top]P^\top \\
        &= \bpm \wh\Sigma & -\wh \Sigma\eins\\ -\eins^\top\wh\Sigma & -\eins^\top \wh \Sigma \eins\epm.
\end{align}
\end{subequations}
We now transform the sought empirical risk by leveraging the shape of pushforward marginal under the LDAF dynamics.
\begin{subequations}
\begin{align}
    &\EE_{v_0 \sim \posterior}[\emprisk(v_0)] = \int \frac{1}{\samplesize}\sum_{k=1}^\samplesize \ell(\phi(x_k), y_k)\dd \posterior (\phi)\label{eq:pushforward_transform1}\\
        &\quad = \frac{1}{\samplesize}\sum_{k=1}^\samplesize \int_{T_0\mc{S}_c} \ell(v+\features_\vartheta(x_k)_\classindexset, y_k)\dd \classmarginal(v)\label{eq:pushforward_transform3}\\
        &\quad = \frac{1}{\samplesize}\sum_{k=1}^\samplesize \int_{\R^{c-1}} \ell(Pz+\features_\vartheta(x_k)_\classindexset, y_k)\rho_{\momentcoord(x_k), \wh\Sigma(x_k)}\dd z\label{eq:pushforward_transform4}
\end{align}
\end{subequations}
Here, \eqref{eq:pushforward_transform3} uses the marginal distribution $\classmarginal$ of the pushforward computed in Proposition~\ref{prop:ldaf_pushforward}. To obtain class probabilities, the tangent vector resulting from LDAF forward integration needs to be lifted at $s_0 = \exp_{\eins_\mc{W}}(\features_\vartheta(x_k))$. In \eqref{eq:pushforward_transform3}, we have expressed this by a shift in classification logits due to
\begin{subequations}
	\begin{align}
		\exp_{\eins_\mc{W}}^{-1}(\exp_{s_0}(v))
		&= \exp_{\eins_\mc{W}}^{-1}(\exp_{\exp_{\eins_\mc{W}}(v^0)}(v))\\
		&= \exp_{\eins_\mc{W}}^{-1}(\exp_{\eins_\mc{W}}(v^0+v))\\
		&= v^0+v
	\end{align}
\end{subequations}
with $v^0 = \exp_{\eins_\mc{W}}^{-1}(s_0) = \features_\vartheta(x_k)$.
\end{proof}

Let $\sqrt{\cdot}$ denote the matrix square root and define the matrix $B\in\R^{c\times N}$ of the first $c$ rows of $\expm(TA)\sqrt{\Sigma_0}$. Then $B B^\top$ is the covariance matrix of $\classmarginal(T)$ and rows
\begin{equation}
    B_i = \sqrt{\Sigma_0}\expm(TA^\top)e_i,\qquad i\in \mc{I} = [c]
\end{equation}
can be efficiently computed by approximating the matrix exponential action via Krylov subspace methods \cite{Saad1992,Hochbruck2010,Niesen:2012}.
Computing $\classmarginal(T)$ is therefore only roughly $c$ times more expensive than a single LDAF forward pass if the action of $\sqrt{\Sigma_0}$ is efficiently computable. We ensure this by the parametrization
\begin{equation}\label{eq:sigma0_shape}
	\sqrt{\Sigma_0} = (\II_n\otimes P)\big(\Diag(d)+qq^\top\big) \in \R^{N\times N}
\end{equation}
with $P$ given by \eqref{eq:T0_coordinates}.
This allows to compute the relative entropy \eqref{eq:KL_divergence} using the Sherman-Morrison formula and the matrix determinant lemma such that learning the parameters $d,q \in \R^{(c-1)n}$ by minimizing the PAC-Bayes bound \eqref{eq::TIWS_bound} is computationally efficient.
Here, we use the parameterization \eqref{eq:sigma0_shape} for both PAC-Bayes prior $\prior$ and posterior $\posterior$ (cf. Section~\ref{sec:benchmarks}).

\subsection{Numerical Integration}\label{sec:QMC}

Previous works on PAC-Bayes risk certification have commonly resorted to approximating the expected empirical risk by a Monte-Carlo (MC) method. This accounts for very high-dimensional domains of integration, but it is computationally expensive because it requires evaluation of the integrand at many sample points which entails a separate forward pass for every drawn sample.
Theorem~\ref{theorem:ldaf_empirical_risk} proposes a way to circumvent this problem by computing the pushforward distribution \textit{only once} (at roughly the cost of $c$ forward passes) and by subsequently performing very cheap sampling of the integrand. This amounts to large efficiency gains when using MC methods.

Even though the domain of integration in \eqref{eq:pushforward_transform4} is low-dimensional compared to, e.g., the parameter space considered in \cite{Perez:2021} (millions of neural network parameters), standard product cubature rules still are not able to compete with MC as the number of function evaluations for such methods grows exponentially with dimension.

However, Quasi-Monte-Carlo (QMC) methods can improve on MC in the case at hand by leveraging smoothness and moderate dimension. The rationale behind QMC methods is to choose a sequence of deterministic sample points which has lower discrepancy than the uniform random points used in MC. By the Koksma-Hlawka inequality \cite{Dick:2013aa}[Theorem 3.9], the error of approximate integration via an unweighted sample point mean is bounded by the Hardy-Krause variation of the integrand times the discrepancy of the sample points. Thus, sequences of low-discrepancy sample points, such as the Sobol sequence, asymptotically lead to more efficient integration than MC if the integrand has bounded Hardy-Krause variation. The rate of convergence was shown to further improve under stricter smoothness assumptions  \cite{Basu:2016}. In practice, QMC methods are observed to outperform MC particularly for moderate dimension and smooth integrands \cite{Morokoff:1995}.
We observe that relatively few (10K) sample points suffice to compute the empirical risk of stochastic LDAF classifiers with sufficient accuracy. This is not the case of MC as illustrated in Figure~\ref{fig:qmc_performance}.

\begin{figure}[ht]
\centering
\includegraphics[width=.7\columnwidth]{./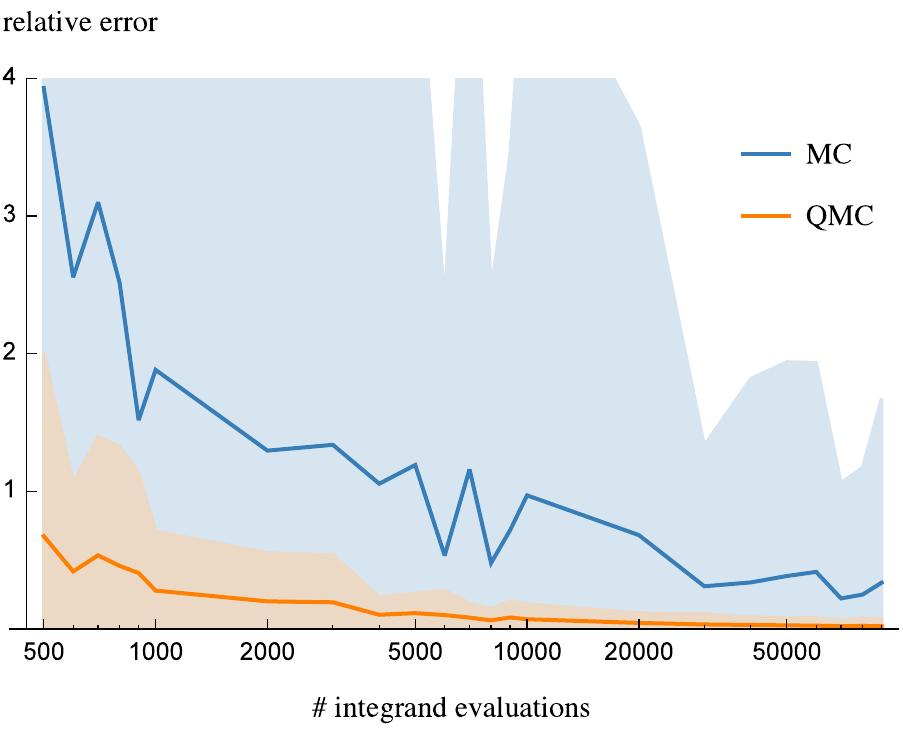}
\caption{Accuracy of QMC integration \crule[Orange]{1.75mm}{1.75mm}
	and MC integration \crule[Blue]{1.75mm}{1.75mm} for computing the expected
	empirical risk \eqref{eqbb:standardized_empirical_integral_last} with varying
	number of sample points. Error bands indicate standard deviation within a batch of 100 CIFAR-10 data points. Because the pushforward distribution of Theorem~\ref{theorem:ldaf_empirical_risk} is computationally tractable, sampling is very efficient and computing the reference solution by drawing 100M MC samples only takes minutes on a single GPU. In our proposed QMC method, we compute the pushforward distribution and subsequently perform 10K integrand evaluations at negligible computational cost.
}\label{fig:qmc_performance}
\end{figure}


To make QMC methods more easily applicable, we transform the integral in \eqref{eq:pushforward_transform4} by successive substitutions
\begin{subequations}\label{eqbb:standardized_empirical_integral}
\begin{align}
    \int_{\R^{c-1}} &\ell(Pz+\features_\vartheta(x_k)_\classindexset, y_k)\rho_{\momentcoord(x_k), \wh\Sigma(x_k)}\dd z\\
        &= \int_{[0,1]^{c-1}} \ell(PH\Phi^{-1}(z)+\moment(T)+\features_\vartheta(x_k)_\classindexset, y_k)\dd z\label{eqbb:standardized_empirical_integral_last}
\end{align}
\end{subequations}
where $\Phi$ is elementwise the cumulative distribution function of a standard normal distribution and $\wh\Sigma = HH^\top$ denotes Cholesky decomposition.
For moderate $c$ such as the classification problems considered in Section~\ref{sec:benchmarks} ($c = 10$), QMC integration allows to compute the empirical risk very precisely while using relatively few (10K) sample points.

In addition to gained efficiency, QMC integration has another distinct advantage over MC: it \emph{commutes with backpropagation}.
To see this, use the shorthand notation $f(z, \theta)$ for the integrand in \eqref{eqbb:standardized_empirical_integral_last} and assume the loss function $\ell$ is differentiable. QMC integration with deterministic Sobol points $\{z_k\}_{k\in [\qmcsamples]}$ followed by derivation reads
\begin{equation}\label{eq:integrate_then_diff}
	\frac{\partial}{\partial \theta_i} \int_{[0,1]^{c-1}} f(z, \theta)\dd z
		\approx \frac{\partial}{\partial \theta_i} \frac{1}{\qmcsamples} \sum_{k\in [\qmcsamples]} f(z_k, \theta)
\end{equation}
and exchanging summation and differentiation on the r.h.s.~yields
\begin{equation}\label{eq:diff_then_integrate}
		\frac{1}{\qmcsamples} \sum_{k\in [\qmcsamples]} \frac{\partial}{\partial \theta_i} f(z_k, \theta)\\
		\approx \int_{[0,1]^{c-1}} \frac{\partial}{\partial \theta_i} f(z_k, \theta)\dd z\ .
\end{equation}
Thus, if integration and differentiation can be exchanged then backpropagation through QMC integration gives the same result as approximating the exact derivative via QMC.

\section{Benchmarks and Discussion}\label{sec:benchmarks}

We now compare self-certifying stochastic classifiers trained by minimizing PAC-Bayes generalization bounds to deterministic classifiers evaluated on held-out test data on the CIFAR-10 and FashionMNIST datasets.

\subsection{Training Stochastic LDAF Classifiers}\label{sec:training}

Consider the image classification task on CIFAR-10 \cite{Krizhevsky:2009} and FashionMNIST \cite{Xiao:2017}.
Even though self-certified learning allows to use the entire dataset for training and judge generalization via risk certification, we still hold out the preassigned test sets for direct comparability with deterministic classifiers. The remaining data are split into a training set used for training priors as well as deterministic classifiers and a validation set ($\samplesize = 10$k) used for training posteriors and evaluating risk certificates.

As a baseline, we use a deterministic ResNet18 classifier \cite{He:2016} with a slight modification to account for small input size (see Appendix~\ref{sec:appendix_resnet}). We train on the described training split (with held-out validation and test data) for 200 epochs of stochastic gradient descent (weight decay $0.001$, momentum $0.9$, batch size $128$) with a cosine annealing learning rate schedule \cite{Loshchilov:2016} starting at learning rate $0.1$ and a light data augmentation regime as in \cite{Zagoruyko:2016}.

Stochastic classifiers $\prior$ and $\posterior$ are implemented as distributions with shape \eqref{eq:measure_product_shape} over the hypothesis space $\mc{H}$. The graph $G$ is chosen relatively small ($n=50$ nodes) and densely connected with symmetric matrix $\Omega$ as in \eqref{eq:wt_omega}. For feature extraction $\features$, we replace the classification head of the ResNet18 described above with a dense layer mapping to $\tangentspace$, i.e., dimension $N = nc = 50\cdot 10$.
Both PAC-Bayes prior $\prior$ and posterior $\posterior$ are implemented by randomizing LDAF initialization $v_0$ on the tangent space $\tangentspace$ according to a zero-mean multivariate normal distribution with covariance parameterized as in \eqref{eq:sigma0_shape}.

To train stochastic classifiers, we proceed in two steps. First, using the \emph{same training routine} as for ResNet18, we train a deterministic LDAF classifier on the training split. This defines the mean of stochastic classifiers in $\mc{H}$. For the PAC-Bayes prior $\prior$, we fix the covariance $\Sigma_0$ parameterized according to \eqref{eq:sigma0_shape} by drawing the entries of $d,p \in \R^{(c-1)n}$ from a univariate normal distribution centered at $0.1$ with variance $0.01$. Initializing $\posterior$ at $\prior$, we subsequently train $\posterior$ by minimizing the r.h.s. of the bound \eqref{eq::TIWS_bound}, alternating between optimization of $\posterior$ and $\lambda$. According to \cite{Thiemann:2017}, the PAC-Bayes-$\lambda$ bound \eqref{eq::TIWS_bound} is strongly quasiconvex as a function of $\lambda$ under mild conditions, which ensures convergence of alternating optimization. For each alternation, we optimize posteriors over 5 epochs of SGD (learning rate 0.1) on the validation set and find empirically that both $\lambda$ and $\posterior$ converge quickly ($< 10$ alternations).
The performances of stochastic classifiers as observed on the held-out test data as well as risk certificates computed on validation data are listed in Table~\ref{tab:benchmarks}.
We provide the code used to compute these values as supplementary material\footnote{Code can be found at \url{https://github.com/IPA-HD/ldaf_classification}}.

\begin{table}[]
\caption{Empirical performance of deterministic (top) and stochastic (middle) classifiers measured as 01 loss (error percentage) on the held-out test data of CIFAR-10 and FashionMNIST. PAC-Bayes risk certificates (bottom) bound the risk of stochastic classifiers with high probability $1-\epsilon$.
Tightness of risk certificates is on par with the state-of-the-art results (PBB = PAC-Bayes with Backprop) reported in \cite{Perez:2021} and our novel method offers improved computational efficiency due to tractable empirical risk.\\}
\label{tab:benchmarks}
\centering
\begin{tabular}{@{}lrr@{}}
\toprule
                              & CIFAR-10 & FashionMNIST   \\ \midrule
ResNet18                      &  \textbf{5.00} & \textbf{4.81}\\
LDAF Mean                     &  5.28      &  5.13   \\
\hline
Prior (ours)          &         {5.49} & 5.13 \\
Posterior (ours)      &  \textbf{5.31} & 5.12 \\
Posterior PBB         &        {14.75} & \\
\hline
Cert. (ours) $\epsilon = 0.01$ &  \textbf{6.36} & 6.07 \\
Cert. (ours) $\epsilon = 0.05$ &  \textbf{6.19} & 5.90 \\
Cert. PBB $\epsilon = 0.035$ & 16.67 & \\ \bottomrule
\end{tabular}
\end{table}

\subsection{Discussion and Conclusion}\label{sec:discussion}

As indicated in Table~\ref{tab:benchmarks}, the empirical performance of stochastic classifiers built in the proposed way is close to the performance of a (deterministic) ResNet18 used as a baseline. Notably, this is without altering the training regime.

The indicated error rate on CIFAR-10 is much lower than the one in \cite{Perez:2021}. We attribute this to ResNet18 being a stronger feature extractor than the 15-layer CNN used in \cite{Perez:2021}. By comparison with the stochastic prediction error rate, risk certificates are also very tight.
This is roughly on par with the results of \cite{Perez:2021} which improves on earlier work by \cite{Dziugaite:2018wc}.

Our contribution is not to find tighter risk certificates, but to provide a novel method that enables a more computationally efficient way to compute them. Because pushing forward an intrinsic normal distribution of initial LDAF assignment states is only about $c$ times more expensive than an LDAF forward pass, sampling the integrand in \eqref{eqbb:standardized_empirical_integral_last} is very cheap by comparison to earlier works, which require a separate forward pass per sample. This allows to, e.g., compute the reference solution used in Figure~\ref{fig:qmc_performance} in 14 minutes on a single GPU for 100M Monte-Carlo samples and batch size 100.

We use cross-entropy as a differentiable surrogate loss for training PAC-Bayes posteriors. This appears problematic because the bound \eqref{eq::TIWS_bound} only certifies risk w.r.t. bounded loss functions. \cite{Perez:2021} address this by modifying cross-entropy to obtain a closely related bounded loss function which is amenable to risk certification. We do not perform this modification and therefore do not obtain valid risk certificates for surrogate loss. However, for classification (01 loss) the bound \eqref{eq::TIWS_bound} holds for \emph{all} posterior distributions, regardless of how they have been computed. Therefore, using unbounded surrogate loss for training does not touch the validity of risk certificates for the bounded 01-loss reported in Table~\ref{tab:benchmarks}. Accordingly, no certificate for surrogate loss is reported.

A key component of the proposed approach are linearized deep assignment flows (LDAFs). We view them as uniquely suitable due to the combination of two factors. (1) The pushforward of normal distributions under LDAF dynamics has a closed form and efficient numerics exist to approximate its moments. (2) Unlike trivial maps which have the first property, the LDAF still has nontrivial representational power.

To illustrate this, attempt to replace the LDAF within the given framework by a learned dense linear operator. Aside from potentially introducing a large number of additional parameters, this effectively merely adds noise to the classification logits and only the stochastic classifier mean depends on data. Thus, essentially no improvement of the posterior over the prior is to be expected. By contrast, the low-rank numerics used to compute the pushforward under the LDAF reveal additional information about learnt parameters that we will exploit in future work.

\section*{Acknowledgements}

This work is funded by the Deutsche Forschungsgemeinschaft (DFG), grant SCHN 457/17-1, within the priority programme SPP 2298: “Theoretical Foundations of Deep Learning”.

This work is funded by the Deutsche Forschungsgemeinschaft (DFG) under Germany’s Excellence Strategy EXC-2181/1 - 390900948 (the Heidelberg STRUCTURES Excellence Cluster).

\bibliography{self_certified}

\newcommand{\etalchar}[1]{$^{#1}$}
\providecommand{\bysame}{\leavevmode\hbox to3em{\hrulefill}\thinspace}
\providecommand{\MR}{\relax\ifhmode\unskip\space\fi MR }
\providecommand{\MRhref}[2]{%
  \href{http://www.ams.org/mathscinet-getitem?mr=#1}{#2}
}
\providecommand{\href}[2]{#2}
\begin{thebibliography}{PORSTS21}

\bibitem[AJVLS17]{ay2017information}
Nihat Ay, J{\"u}rgen Jost, H{\^o}ng V{\^a}n~L{\^e}, and Lorenz
  Schwachh{\"o}fer, \emph{{Information Geometry}}, vol.~64, Springer
  International Publishing, 2017.

\bibitem[{\AA}PSS17]{Astroem2017}
Freddie {\AA}str\"{o}m, Stefania Petra, Bernhard Schmitzer, and Christoph
  Schn\"orr, \emph{{Image Labeling by Assignment}}, Journal of Mathematical
  Imaging and Vision \textbf{58} (2017), no.~2, 211--238.

\bibitem[BCKW15]{Blundell:2015}
Charles Blundell, Julien Cornebise, Koray Kavukcuoglu, and Daan Wierstra,
  \emph{{Weight Uncertainty in Neural Network}}, International Conference on
  Machine Learning, vol.~37, PMLR, July 2015, pp.~1613--1622.

\bibitem[BO16]{Basu:2016}
Kinjal Basu and Art~B. Owen, \emph{{Transformations and Hardy--Krause
  Variation}}, SIAM Journal on Numerical Analysis \textbf{54} (2016), no.~3,
  1946--1966 (English).

\bibitem[BSS21]{Boll:2021vb}
Bastian Boll, Jonathan Schwarz, and Christoph Schn\"{o}rr, \emph{{On the
  Correspondence Between Replicator Dynamics and Assignment Flows}},
  Proceedings SSVM, LNCS, vol. 12679, Springer International Publishing, 2021,
  pp.~373--384.

\bibitem[Cat07]{Catoni:2007aa}
Oliver Catoni, \emph{{PAC-Bayesian Supervised Classification: The
  Thermodynamics of Statistical Learning}}, IMS Lecture Notes Monograph Series,
  vol.~56, Institute of Mathematical Statistics, 2007.

\bibitem[CRBD18]{Chen:2018ab}
Ricky T.~Q. Chen, Yulia Rubanova, Jesse Bettencourt, and David Duvenaud,
  \emph{{Neural Ordinary Differential Equations}}, NeurIPS, 2018.

\bibitem[DKS13]{Dick:2013aa}
Josef Dick, Frances~Y. Kuo, and Ian~H. Sloan, \emph{{High-Dimensional
  Integration: The Quasi-Monte Carlo Way}}, Acta Numerica \textbf{22} (2013),
  133--288.

\bibitem[DR18]{Dziugaite:2018wc}
Gintare~Karolina Dziugaite and Daniel~M. Roy, \emph{{Data-Dependent PAC-Bayes
  Priors via Differential Privacy}}, NeurIPS, 2018.

\bibitem[GAZS22]{Gonzalez-Alvarado:2021vn}
Daniel Gonzalez-Alvarado, Alexander Zeilmann, and Christoph Schn\"{o}rr,
  \emph{{Quantifying Uncertainty of Image Labelings Using Assignment Flows}},
  {DAGM GCPR: Pattern Recognition}, {Lecture Notes in Computer Science}, vol.
  13024, {Springer International Publishing}, January 2022, pp.~453--466.

\bibitem[Gue19]{Guedj:2019tu}
Benjamin Guedj, \emph{{A Primer on PAC-Bayesian Learning}}.

\bibitem[HO10]{Hochbruck2010}
Marlis Hochbruck and Alexander Ostermann, \emph{{Exponential Integrators}},
  Acta Numerica \textbf{19} (2010), 209--286.

\bibitem[HS03]{Hofbauer:2003aa}
Josef Hofbauer and Karl Siegmund, \emph{{Evolutionary Game Dynamics}}, Bulletin
  of the American Mathematical Society \textbf{40} (2003), no.~4, 479--519.

\bibitem[HZRS16]{He:2016}
Kaiming He, Xiangyu Zhang, Shaoqing Ren, and Jian Sun, \emph{{Deep Residual
  Learning for Image Recognition}}, 2016 IEEE Conference on Computer Vision and
  Pattern Recognition (CVPR), 2016, pp.~770--778.

\bibitem[KH{\etalchar{+}}09]{Krizhevsky:2009}
Alex Krizhevsky, Geoffrey Hinton, et~al., \emph{{Learning multiple layers of
  features from tiny images}}.

\bibitem[LC02]{Langford:2002}
John Langford and Rich Caruana, \emph{{(Not) Bounding the True Error}},
  Advances in Neural Information Processing Systems, vol.~14, MIT Press, 2002,
  pp.~809--816.

\bibitem[LH16]{Loshchilov:2016}
Ilya Loshchilov and Frank Hutter, \emph{{SGDR: Stochastic Gradient Descent with
  Warm Restarts}}, arXiv preprint arXiv:1608.03983 (2016).

\bibitem[LS01]{Langford:2001}
John Langford and Matthias Seeger, \emph{{Bounds for Averaging Classifiers}},
  Technical Report CMU-CS-01-102 (2001).

\bibitem[MC95]{Morokoff:1995}
William~J. Morokoff and Russel~E. Caflisch, \emph{{Quasi-Monte Carlo
  Integration}}, Journal of Computational Physics \textbf{122} (1995),
  218--230.

\bibitem[MM15]{Madeo:2014}
Dario Madeo and Chiara Mocenni, \emph{{Game Interactions and Dynamics on
  Networked Populations}}, {IEEE} Transactions on Automatic Control \textbf{60}
  (2015), no.~7, 1801--1810.

\bibitem[NW12]{Niesen:2012}
Jitse Niesen and Will~M. Wright, \emph{Algorithm 919: {{A Krylov Subspace
  Algorithm}} for {{Evaluating}} the $\varphi$-{{Functions Appearing}} in
  {{Exponential Integrators}}}, {ACM Transactions on Mathematical Software}
  \textbf{38} (2012), no.~3, 1--19.

\bibitem[PORSTS21]{Perez:2021}
Mar{\i}a P{\'e}rez-Ortiz, Omar Rivasplata, John Shawe-Taylor, and Csaba
  Szepesv{\'a}ri, \emph{{Tighter Risk Certificates for Neural Networks}},
  Journal of Machine Learning Research \textbf{22} (2021), no.~227, 1--40.

\bibitem[Pri58]{Price:1958}
Robert Price, \emph{{A useful theorem for nonlinear devices having Gaussian
  inputs}}, {IRE Transactions on Information Theory} \textbf{4} (1958), no.~2,
  69--72.

\bibitem[RH05]{Rue:2005aa}
Havard Rue and Leonhard Held, \emph{{Gaussian Markov Random Fields: Theory and
  Applications}}, Monographs on statistics and applied probability, no. 104,
  Chapman \& Hall/CRC, February 2005.

\bibitem[Saa92]{Saad1992}
Yousef Saad, \emph{{Analysis of Some Krylov Subspace Approximations to the
  Matrix Exponential Operator}}, SIAM Journal on Numerical Analysis \textbf{29}
  (1992), no.~1, 209--228.

\bibitem[Sch20]{Schnorr2019aa}
Christoph Schn\"{o}rr, \emph{{Assignment Flows}}, {Handbook of Variational
  Methods for Nonlinear Geometric Data} (P.~Grohs, M.~Holler, and A.~Weinmann,
  eds.), Springer International Publishing, 2020, pp.~235--260.

\bibitem[SS21]{Savarino:2021wt}
Fabrizio. Savarino and Christoph Schn\"{o}rr, \emph{{Continuous-Domain
  Assignment Flows}}, European Journal of Applied Mathematics \textbf{32}
  (2021), no.~3, 570--597.

\bibitem[TIWS17]{Thiemann:2017}
Niklas Thiemann, Christian Igel, Olivier Wintenberger, and Yevgeny Seldin,
  \emph{{A Strongly Quasiconvex PAC-Bayesian Bound}}, Int.~Conf.~Algorithmic
  Learning Theory (ALT), vol.~76, PMLR, October 2017, pp.~466--492.

\bibitem[XRV17]{Xiao:2017}
Han Xiao, Kashif Rasul, and Roland Vollgraf, \emph{{Fashion-MNIST: a Novel
  Image Dataset for Benchmarking Machine Learning Algorithms}}, 2017.

\bibitem[ZK16]{Zagoruyko:2016}
Sergey Zagoruyko and Nikos Komodakis, \emph{{Wide Residual Networks}},
  Proceedings of the British Machine Vision Conference (BMVC), BMVA Press,
  September 2016, pp.~87.1--87.12.

\bibitem[ZPS21]{Zeilmann:2021wt}
Alexander Zeilmann, Stefania Petra, and Christoph Schn\"{o}rr, \emph{{Learning
  Linear Assignment Flows for Image Labeling via Exponential Integration}},
  Proceedings SSVM, vol. 12679, Springer International Publishing, 2021,
  pp.~385--397.

\bibitem[ZSPS20]{Zeilmann:2020aa}
Alexander Zeilmann, Fabrizio Savarino, Stefania Petra, and Christoph
  Schn{\"{o}}rr, \emph{{Geometric Numerical Integration of the Assignment
  Flow}}, Inverse Problems \textbf{36} (2020), no.~3, 034004.

\bibitem[ZZS21]{Zern:2020aa}
Artjom Zern, Alexander Zeilmann, and Christoph Schn\"{o}rr, \emph{{Assignment
  Flows for Data Labeling on Graphs: Convergence and Stability}}, Information
  Geometry (2021).

\end{thebibliography}
\bibliographystyle{amsalpha}

\newpage
\appendix
\onecolumn
\section{ResNet Training}\label{sec:appendix_resnet}

The ResNet18 architecture was proposed by \cite{He:2016} for classification on ImageNet which contains much larger images than the ones considered here. In order to account for small image dimensions (CIFAR: $32\times 32$ pixels, FashionMNIST: $28\times 28$ pixels), we swap the first convolution in ResNet18 (stride 2, kernel size 7) for a smaller one (stride 1, kernel size 3). We train with randomly initialized weights and do not change the training regime between CIFAR-10 and FashionMNIST. Our feature extractors and reference classifier implementation are based on the freely available PyTorch implementation \url{https://github.com/kuangliu/pytorch-cifar}.

\end{document}